\newcounter{Lcount}
\newcommand{\numsquishlist}{
   \begin{list}{\arabic{Lcount}. }
    { \usecounter{Lcount}
 \setlength{\itemsep}{-.1ex}      \setlength{\parsep}{0ex}
      \setlength{\topsep}{0ex}       \setlength{\partopsep}{0ex}
      \setlength{\leftmargin}{1em} \setlength{\labelwidth}{1em}
      \setlength{\labelsep}{0.1em} } }
\newcommand{\numsquishend}{\end{list}}
\newcommand{\squishlist}{
   \begin{list}{$\bullet$}
    { \setlength{\itemsep}{-.1ex}      \setlength{\parsep}{0ex}
      \setlength{\topsep}{0ex}       \setlength{\partopsep}{0ex}
      \setlength{\leftmargin}{.8em} \setlength{\labelwidth}{1em}
      \setlength{\labelsep}{0.5em} } }
\newcommand{\squishend}{\end{list}}
\newcommand{\clip}{{\sc Coordination Initiator Inference Problem}\xspace}%
\newcommand{\argmax}{\mathop{\mathrm{argmax}}\limits} 
\newcommand{\indep}{\rotatebox[origin=c]{90}{$\models$}}
\def\@IEEEpubidpullup{8\baselineskip}
\begin{document}


%


\title{Variable-lag Granger Causality \\for Time Series Analysis}



%


\author{ 
\IEEEauthorblockN{Chainarong Amornbunchornvej\IEEEauthorrefmark{1}, Elena Zheleva\IEEEauthorrefmark{2}, and
Tanya Y. Berger-Wolf\IEEEauthorrefmark{3}}

\IEEEauthorblockA{
\IEEEauthorrefmark{1}National Electronics and Computer Technology Center, Pathum Thani, Thailand \\
Email: chainarong.amo@nectec.or.th\IEEEauthorrefmark{1}}
\IEEEauthorblockA{
\IEEEauthorrefmark{2}\IEEEauthorrefmark{3}Department of Computer Science, University of Illinois at Chicago, Chicago, IL, USA. \\
Email: ezheleva@uic.edu\IEEEauthorrefmark{2}, tanyabw@uic.edu\IEEEauthorrefmark{3}}
}


\maketitle
\begin{abstract}
  Granger causality is a fundamental technique for causal inference in time series data, commonly used in the social and biological sciences. Typical operationalizations of Granger causality make a strong assumption that every time point of the effect time series is influenced by a combination of other time series with a fixed time delay. However, the assumption of the fixed time delay does not hold in many applications, such as collective behavior, financial markets, and many natural phenomena. To address this issue, we develop variable-lag Granger causality, a generalization of Granger causality that relaxes the assumption of the fixed time delay and allows causes to influence effects with arbitrary time delays. In addition, we propose a method for inferring variable-lag Granger causality relations.  We demonstrate our approach on an application for studying coordinated collective behavior and show that it  performs better than several existing methods in both simulated and real-world datasets. Our approach can be applied in any domain of time series analysis. 
\end{abstract}


%
\IEEEpeerreviewmaketitle

\section{Introduction}

Inferring causal relationships from data is a fundamental problem in statistics, economics, and science in general. The gold standard for assessing causal effects is running randomized controlled trials which randomly assign a treatment (e.g., a drug or a specific user interface) to a subset of a population of interest, and randomly select another subset as a control group which is not given the treatment, thus attributing the outcome difference between the two groups to the treatment. However, in many cases, running such trials may be unethical, expensive, or simply impossible~\cite{Varian7310}. 
To address this issue, several methods have been developed to estimate causal effects from observational data~\cite{pearl2000causality,Spirtes1993}. 

In the context of time series data, a well-known method that defines a causal relation in terms of \emph{predictability} is Granger causality~\cite{granger1969investigating}. $X$ Granger-causes $Y$ if past information on $X$ predicts the behavior of $Y$ better than $Y$'s past information alone~\cite{Arnold:2007:TCM:1281192.1281203}. In this work, when we refer to causality, we mean specifically the predictive causality defined by Granger causality. The key assumptions of Granger causality are that 1) the process of effect generation can be explained by a set of structural equations, and 2) the current realization  of the effect at any time point is influenced by a set of causes in the past. Similar to other causal inference methods, Granger causality assumes unconfoundedness and that all relevant variables are included in the analysis. 
There are several studies that have been developed based on Granger causality ~\cite{liu2012sparse,atukeren2010relationship,peters2013causal}. The typical operational definitions~\cite{atukeren2010relationship} and inference methods for inferring Granger causality, including the common software implementation packages~\cite{MLSourcecode,RSourcecode}, assume that the effect is influenced by the cause with a fixed and constant time delay. 

This assumption of a fixed and constant time delay between the cause and effect is, in fact, too strong for many applications of understanding natural world and social phenomena. In such domains, data is often in the form of a set of time series and a common question of interest is which time series are the (causal) initiators of patterns of behaviors captured by another set of time series. For example, who are the individuals who influence a group's direction in collective movement? What are the sectors that influence the stock market dynamics right now? Which part of the brain is critical in activating a response to a given action? In all of these cases, effects follow the causal time series with delays that can vary over time. 


To address the remaining gap, we introduce the concept \emph{Variable-lag Granger causality} and a method to infer it in time series data. We prove that our definition and the proposed inference method can address the arbitrary-time-lag influence between cause and effect, while the traditional operationalizations of Granger causality and the corresponding inference methods cannot. We show that the traditional definition is indeed a special case of the new relation we define. We demonstrate the applicability of the newly defined causal inference framework by inferring initiators of collective coordinated movement, a problem proposed in~\cite{FLICAtkdd}. 

We use Dynamic Time Warping (DTW)~\cite{sakoe1978dynamic} to align the cause $X$ to the effect time series $Y$ while leveraging the power of Granger causality. In the literature, there are many clustering-based Granger causality methods that use DTW to cluster time series and perform Granger causality only for time series within the same clusters~\cite{yuan2016deep,Peng:2007:SSI:1288552.1288557}. Previous work on inferring causal relations using both Granger causality and DTW has the assumption that the smaller warping distance between two time series, the stronger the causal relation is~\cite{sliva2015tools}. If the minimum distance of elements within the DTW optimal warping path is below a given distance threshold, then the method considers that there is a causal relation between the two time series. However, their work assumes that Granger causality and DTW should run independently.
In contrast, our method formalizes the integration of Granger causality and DTW by generalizing the definition of Granger causality itself and using DTW as an instantiation of the optimal alignment requirement of the time series. 

In addition to the standard uses of Granger causality, our method is capable of:
\squishlist
\item {\bf Inferring arbitrary-lag causal relations:} our method can infer Granger causal relation where a cause influences an effect with arbitrary delays that can change dynamically; 
\item {\bf Quantifying variable-lag emulation:} our method can report the similarity of time series patterns between the cause and the delayed effect, for arbitrary delays; 
\squishend

We also prove that when multiple time series cause the behavioral convergence of a set of time series then we can treat the set of these initiating causes in the aggregate and there is a causal relation between this aggregate cause (of the set of initiating time series) and the aggregate of the rest of the time series. We provide  many experiments and examples using both simulated and real-world datasets to measure the performance of our approach in various causality settings and discuss the resulting domain insights. Our framework is highly general and can be used to analyze time series from any domain.

\section{Related work}


Many causal inference methods assume that the data is {\em i.i.d.} and rely on knowing a mechanism that generates that data, e.g., expressed through causal graphs or structural equations~\cite{pearl2000causality}. 
In time series data, the values of the consecutive time steps violate the {\em i.i.d.} assumption.
Another set of causal inference methods relax the strong {\em i.i.d} assumption, and instead assume independence between the cause 
and the mechanism 
generating the effect 
~\cite{janzing2010causal,scholkopf2012causal,shajarisales2015telling}. Specifically, knowing the cause $X=x$ never reveals information about the structural function $f(X)$ and vice versa. This idea has been used in the context of times series data~\cite{shajarisales2015telling} by relying on the concept of Spectral Independence Criterion (SIC). If a cause $X$ is a stationary process that generates the effect $Y$ via linear time invariance filter $h$ (mechanism), then $X$ and $h$ should not contain any information about each other but dependency between them and $Y$ exists in spectral sense. 

Granger causality has inspired a lot of research since its introduction in 1969~\cite{granger1969investigating}. Recent work on Granger causality has focused on various generalizations for it, including ones based on information theory, 
such as transfer entropy~\cite{schreiber-prl00,shibuya-kdd09} and directed information graphs~\cite{7273888}. Recent inference methods are able to deal with missing data~\cite{iseki-aaai19} and enable feature selection~\cite{Sun2015}. Granger causality has even been explored as a method to offer explainability of machine learning models~\cite{schwab-aaai19}. 
However, none of them study tests for variable-lag Granger causality, as we propose in this work. Besides, no method studies a causal structure that is unstable overtime~\cite{doi:10.1098/rsta.2011.0613}. 
In our work, we also relax the stationary assumption of time series.


\newtheorem{definition}{Definition}
\newtheorem{theorem}{Theorem}[section]
\newtheorem{lemma}[theorem]{Lemma}
\newtheorem{proposition}[theorem]{Proposition}
\newtheorem{corollary}[theorem]{Corollary}

\newenvironment{proof}[1][Proof]{\begin{trivlist}
\item[\hskip \labelsep {\bfseries #1}]}{\end{trivlist}}
\newenvironment{example}[1][Example]{\begin{trivlist}
\item[\hskip \labelsep {\bfseries #1}]}{\end{trivlist}}
\newenvironment{remark}[1][Remark]{\begin{trivlist}
\item[\hskip \labelsep {\bfseries #1}]}{\end{trivlist}}

\newcommand{\qed}{\nobreak \ifvmode \relax \else
      \ifdim\lastskip<1.5em \hskip-\lastskip
      \hskip1.5em plus0em minus0.5em \fi \nobreak
      \vrule height0.75em width0.5em depth0.25em\fi}
      
\section{Granger causality and fixed lag limitation}
\begin{figure}[t!]
\centering
\includegraphics[width=.7\columnwidth]{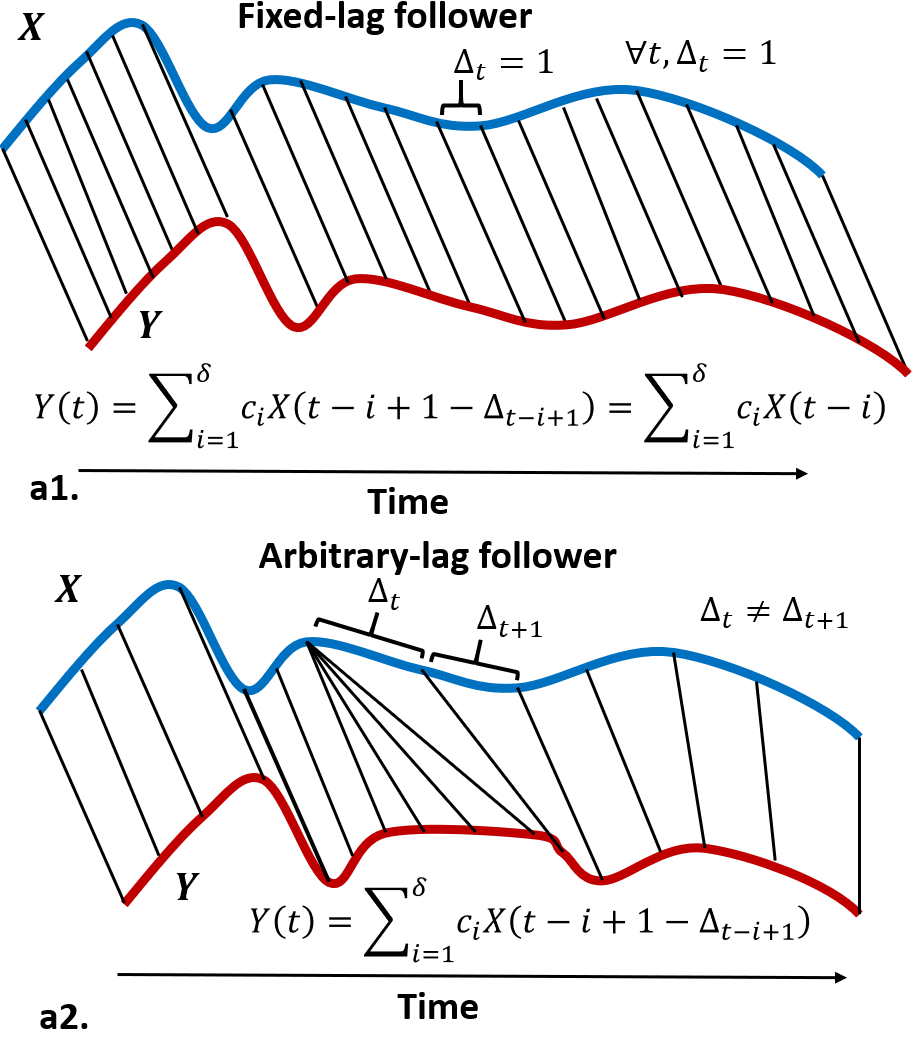}
\caption{(a1-2.) A leader (blue) influences a follower (red) at a specific time point via black lines. (a1.) The follower is a distorted version of a leader with a fixed lag.  (a2.) The follower  is a distorted version of a leader with non-fixed lags in that violates an assumption of Granger causality. Granger causality can handle only the former case and typically fails to handle later case.  We propose the generalization of Granger causality to handle variable-lag situation (equation in a2.).  }
\label{fig:LinearVSArbLag}
\end{figure}

Let $X=(X(1),\dots,X(t),\dots)$ be a time series. We will use $X(t)\in \mathbb{R}$ to denote an element of $X$ at time $t$. Given two time series $X$ and $Y$, it is said that  $X$ Granger-causes~\cite{granger1969investigating}  $Y$ if the information of $X$ in the past helps  improve the prediction of the behavior of $Y$, over $Y$'s past information alone~\cite{Arnold:2007:TCM:1281192.1281203}. The typical way to operationalize this general definition of  Granger causality ~\cite{atukeren2010relationship} is to define it as follows:
\begin{definition}[Granger causal relation]
\label{def:GC}
Let $X$ and $Y$ be time series, and $\delta_{max}\in\mathbb{N}$ be a maximum time lag. We define two residuals of regressions of $X$ and $Y$,  $r_{Y}, r_{YX}$, below: 
\begin{equation}\label{eq1}
	r_{Y}(t) = Y(t) - \sum_{i=1}^{\delta_{max}}a_i Y(t-i),
\end{equation}
\begin{equation}\label{eq2}
	r_{YX}(t) = Y(t)- \sum_{i=1}^{\delta_{max}} (a_i Y(t-i) +  b_i X(t-i)),
\end{equation}
where $a_i$ and $b_i$ are constants that optimally minimize the residual from the regression. Then $X$ Granger-causes $Y$ if the variance of $r_{YX}$ is less than  the variance of $r_Y$.
\end{definition}

This definition assumes that, for all $t>0$, $Y(t)$ can be predicted by the fixed linear combination of $a_1Y(1),\dots,a_{t-\Delta}Y(t-\Delta)$ and $b_1X(1),\dots,b_{t-\Delta}X(t-\Delta)$ with some fixed $\Delta>0$ and every $a_i,b_i$ is a fixed constant over time~\cite{atukeren2010relationship,Arnold:2007:TCM:1281192.1281203}. However, in reality, two time series might influence each other with a sequence of arbitrary, non-fixed time lags. For example, Fig.~\ref{fig:LinearVSArbLag}(a2.) has $X$ as a cause time series and $Y$ as the effect time series that imitates the values of $X$ with arbitrary lags.  Because $Y$ is not affected by $X$ with a fixed lags and the linear combination above can  change over time,  the standard Granger causality tests cannot appropriately infer Granger-causal relation between $X$ and $Y$ even if $Y$ is just a slightly distorted version of $X$ with some lags. For a concrete example, consider a movement context where time series represent  trajectories. Two people follow each other if they move in the same trajectory. Assuming the followers follow leaders with a fixed lag means the followers walk lockstep with the leader, which is not the natural way we walk.  Imagine two people embarking on a walk. The first starts walking, the second catches up a little later. They may walk together for a bit, then the second stops to tie the shoe and catches up again. The delay between the first and the second person keeps changing, yet there is no question the first sets the course and is the cause of the second's choices where to go. Fig.~\ref{fig:LinearVSArbLag} illustrates this example. 


\section{Variable-lag Granger Causality}

Here, we propose the concept of variable-lag Granger causality,  \emph{VL-Granger causality} for short, which generalizes the Granger causal relation of Definition~\ref{def:GC}  in a way that addresses the fixed-lag limitation. We demonstrate the application of the new causality relation for a specific application of inferring initiators and followers of collective behavior.

 \begin{definition}[Alignment of time series ]
 \label{def:AlignSeq}
 An alignment between two time series $X$ and $Y$ is a sequence of pairs of indices $(t_i,t_j)$, aligning $X(t_i)$ to $Y(t_j)$, such that for any two pairs in the alignment $(t_{i},t_{j})$ and $(t'_{i},t'_{j})$, if $t_{i}< t'_{i}$ then $t_{j} < t'_{j}$ (non-crossing condition). The alignment defines a sequence of delays $P=(\Delta_1,\dots,\Delta_t,\dots)$, where $\Delta_t\in\mathbb{Z}$ and $X(t-\Delta_t)$ aligns to $Y(t )$.
 \end{definition}

\begin{definition}[VL-Granger causal relation]
\label{def:ArbCausalR}
Let $X$ and $Y$ be time series, and $\delta_{max}\in\mathbb{N}$ be a maximum time lag (this is an upper bound on the time lag between any two pairs of time series values to be considered as causal). We define residual $r^*_{YX}$ of the regression:
\begin{equation}\label{eq3}
	r^*_{YX}(t) = Y(t)- \sum_{i=1}^{\delta_{max}} (a_i Y(t-i) +  b_i X(t-i) + c_i X^*(t-i)).
\end{equation}
Here $X^*(t-i)=X(t-i+1-\Delta_{t-i+1} )$, where $\Delta_{t} > 0$ is a time delay constant in  the optimal alignment sequence $P^*$ of $X$ and $Y$ that minimizes the residual of the regression.  The constants $a_i,b_i$, and $c_i$ optimally minimize the residuals $r_Y$, $r_{YX}$, and $r^*_{YX}$, respectively. The terms $b_i$ and $c_i$ can be combined but we keep them separate to clearly denote the difference between the original and proposed VL-Granger causality.
We say that $X$ VL-Granger-causes $Y$ if the variance of $r^*_{YX}$ is less than  the variances of both $r_Y$ and $r_{YX}$.
\end{definition} 

In order to make Definition~\ref{def:ArbCausalR} fully operational for this more general case (and to find the optimal constants values), we need a similarity function between two sequences which will define the optimal alignment. We propose such a similarity-based approach in Definition~\ref{def:VLemulation}.  Before defining this approach, we show that VL-Granger causality is the proper generalization of the traditional operational definition of Granger causality stated in Definition~\ref{def:GC}.
Clearly, if all the delays are constant then $r^*_{YX}(t)=r_{YX}(t)$.
\begin{proposition}
\label{prop:VLGrC1}
Let $X$ and $Y$ be time series and $P$ be their alignment sequence. If   $\forall t,\Delta_t=\Delta$, then $r^*_{YX}(t)=r_{YX}(t)$.
\end{proposition}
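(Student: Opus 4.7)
The plan is to substitute the constant-delay assumption $\Delta_t = \Delta$ directly into Definition~\ref{def:ArbCausalR} and verify that the regression equation~(\ref{eq3}) collapses to the form of equation~(\ref{eq2}), so that their optimal least-squares residuals coincide.

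First, I would substitute $\Delta_t = \Delta$ into the expression $X^*(t-i) = X(t-i+1-\Delta_{t-i+1})$ to obtain $X^*(t-i) = X(t-i+1-\Delta)$, a deterministic time shift of $X$ by a fixed offset $\Delta - 1$. This reduces the data-driven alignment term to a static function of past values of $X$, so $X^*$ carries no information that is not already encoded in the lagged values of $X$ itself.

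Next, I would rewrite $\sum_{i=1}^{\delta_{max}}\bigl(b_i X(t-i) + c_i X^*(t-i)\bigr)$ as a single linear combination of past values of $X$ by re-indexing the shifted terms (letting $j = i + \Delta - 1$ in the $c_i$ sum). The result is a regression of $Y(t)$ on $\{Y(t-i)\}_i$ and lagged values $\{X(t-k)\}_k$ alone, which is exactly the functional form of equation~(\ref{eq2}); indeed the authors already remark that $b_i$ and $c_i$ may be combined. Since the least-squares optimum over this common function class is unique, the minimizing residual for~(\ref{eq3}) equals the minimizing residual for~(\ref{eq2}), giving $r^*_{YX}(t) = r_{YX}(t)$.

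The main obstacle I anticipate is careful bookkeeping of the lag window: for $\Delta > 1$ the re-indexing formally introduces $X$ terms at lags exceeding $\delta_{max}$, so one must argue either that $\delta_{max}$ is understood as the effective upper bound of the combined regression, or that the extra flexibility is vacuous because $X^*$ is a deterministic function of the past $X$ values already spanned by the $b_i X(t-i)$ terms (trivially so when $\Delta = 1$, where $X^* \equiv X$ identically). Once that is pinned down, the proof reduces to algebraic substitution together with the standard invariance of a least-squares fit under affine reparametrization of its predictors.
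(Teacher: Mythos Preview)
The paper does not actually supply a proof of this proposition: it only prefaces the statement with the single word ``Clearly,'' and then moves on to Proposition~\ref{prop:VLGrC2}. Your substitution-and-reparametrization argument is precisely the computation that ``clearly'' is gesturing at, so your approach is consistent with (and considerably more explicit than) what the paper offers.

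The lag-window issue you flag in your final paragraph is genuine and is simply glossed over in the paper. With $\Delta>1$ the re-indexed $c_i$ terms land at lags $\Delta,\dots,\Delta+\delta_{\max}-1$, which exceed $\delta_{\max}$; strictly speaking this makes the regressor set in~(\ref{eq3}) a superset of that in~(\ref{eq2}), so one only gets $\mathrm{VAR}(r^*_{YX})\le \mathrm{VAR}(r_{YX})$ rather than equality of the pointwise residuals. The paper's informal stance---visible in the remark that ``$b_i$ and $c_i$ can be combined''---is that $\delta_{\max}$ should be read as the effective maximum lag of the merged predictor set, in which case the two regressions coincide by the invariance argument you give. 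Your proposal already names both of the natural fixes (absorb the shift into $\delta_{\max}$, or observe that the added predictors are redundant when $\Delta=1$), which is more care than the paper itself takes.
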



We must also show that the variance of $r^*_{YX}(t)$ is no greater than the variance of $r_{YX}(t)$.
\begin{proposition}
\label{prop:VLGrC2}
Let $X$ and $Y$ be time series, $P=(\Delta_1, \dots,\Delta_t, \dots)$ be their alignment sequence such that $Y(t)=X(t-\Delta_t)$. If  $\exists \Delta_t,\Delta_{t'}\in P$,  such that $\Delta_t \neq \Delta_{t'}$ and  $\forall t, X(t)\neq X(t-1)$, then   $VAR(r^*_{YX}) < VAR(r_{YX} )$.
 \end{proposition}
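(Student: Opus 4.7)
The plan is to show $VAR(r^*_{YX}) = 0$ while $VAR(r_{YX}) > 0$, obtaining the strict inequality by a sandwich argument that leverages the fact that Equation~(\ref{eq3}) nests Equation~(\ref{eq2}). Setting every $c_i = 0$ in (\ref{eq3}) recovers (\ref{eq2}), so any feasible solution to the fixed-lag regression is feasible for the VL regression. Because the coefficients in both are chosen to minimize residual variance, this nesting alone already gives $VAR(r^*_{YX}) \leq VAR(r_{YX})$.

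Next, I would exhibit coefficients making $r^*_{YX}$ vanish identically. Plugging $i=1$ into $X^*(t-i) = X(t-i+1-\Delta_{t-i+1})$ yields $X^*(t-1) = X(t-\Delta_t) = Y(t)$ by the standing assumption $Y(t) = X(t-\Delta_t)$. Taking $c_1 = 1$, $c_i = 0$ for $i \geq 2$, and $a_i = b_i = 0$ therefore produces $r^*_{YX}(t) \equiv 0$. Since the optimal alignment $P^*$ in Definition~\ref{def:ArbCausalR} cannot do worse than the alignment $P$ supplied by the hypothesis, we conclude $VAR(r^*_{YX}) = 0$.

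The crux of the argument is the strict inequality $VAR(r_{YX}) > 0$, which I would attack by contradiction. Suppose the optimal fixed-lag regression attains zero variance, so that there exist fixed constants $a_1,\dots,a_{\delta_{max}}, b_1,\dots,b_{\delta_{max}}$ with $Y(t) = \sum_{i=1}^{\delta_{max}} \bigl(a_i Y(t-i) + b_i X(t-i)\bigr)$ for every $t$. Substituting $Y(s) = X(s - \Delta_s)$ converts this into a linear identity among shifted copies of $X$ whose shift pattern depends on $t$ through the $\Delta_{\cdot}$'s, while the coefficients do not. Choosing the $t, t'$ with $\Delta_t \neq \Delta_{t'}$ granted by the hypothesis, the left-hand sides are $X(t-\Delta_t)$ and $X(t'-\Delta_{t'})$ with genuinely different offsets, whereas the right-hand sides reuse the same coefficient vector. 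The non-constancy assumption $X(s) \neq X(s-1)$ for all $s$ prevents distinct shifts of $X$ from accidentally coinciding, and this is precisely what I expect to yield the contradiction.

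The main obstacle is this last step, since \emph{different delays, same coefficients} must be parlayed into a true algebraic incompatibility. A practical route is to specialize to $t'$ close to $t$ so that only a few terms change between the two regression identities, and then peel off the mismatch using $X(s) \neq X(s-1)$ directly. This may need a small supporting genericity remark, since without some such condition one could construct pathological $X$ whose shifted values line up by coincidence; the assumption $X(s) \neq X(s-1)$ supplied by the proposition is exactly the lever that rules this out for one-step mismatches between $\Delta_t$ and $\Delta_{t'}$.
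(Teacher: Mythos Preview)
Your approach mirrors the paper's: show $r^*_{YX}\equiv 0$ via an explicit coefficient choice (your $c_1=1$, others zero, is in fact cleaner than the paper's ``$c_i=1$ for all $i$''), then argue $VAR(r_{YX})>0$. For the latter the paper does precisely what you suggest at the end --- it specializes to adjacent indices with $\Delta_{t+1}=\Delta_t+1$, notes $Y(t)=Y(t+1)=X(t-\Delta_t)$, and observes that a single fixed lag cannot zero out both residuals --- so your ``specialize to $t'$ close to $t$'' instinct is exactly the paper's move, carried out at the same informal level you anticipate.
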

\begin{proof}
Because $Y(t)=X(t-\Delta_t)$, by setting $a_i=0,b_i=0,c_i=1$ for all $i$, we have  $r^*_{YX}=0$. In contrast, suppose $\Delta_{t+1}=\Delta_t+1$ and $X(t-\Delta_t-1)\neq X(t-\Delta_t) \neq X(t-\Delta_t+1)$, so $Y(t)=Y(t+1)=X(t-\Delta_t)$. Because  $a_i,b_i$ must be constant for all time step $t$ to compute $r_{YX}(t)$, at time $t$, the regression must choose to match either 1) $Y(t) - X(t-\Delta_t)=0$ and $Y(t+1) - X(t+1-\Delta_t)\neq 0$ or 2) $Y(t) - X(t-\Delta_{t+1})\neq 0$ and $Y(t+1) - X(t+1-\Delta_{t+1})=0$. Both 1) and 2) options make  $r_{YX}(t) +  r_{YX}(t+1) >0$. Hence, $VAR(r^*_{YX}) < VAR(r_{YX} )$.
\end{proof}
According to Propositions \ref{prop:VLGrC1} and \ref{prop:VLGrC2}, VL-Granger causality is the generalization of the Def.~\ref{def:GC} and always has lower or equal variance.


Of a particular interest is the case when there is an explicit similarity relation defined over the domain of the input time series. The underlying alignment of  VL-Granger causality then should incorporate that similarity measure and the methods for inferring the optimal alignment for the given similarity measure.

\begin{definition}[Variable-lag emulation]
\label{def:VLemulation}
Let $\mathcal{U}$ be a set of time series, $X,Y\in \mathcal{U}$, and $\mathrm{sim}: \mathcal{U}\times \mathcal{U} \to [0,1]$ be a similarity measure between two time series.
For a threshold $\sigma \in (0,1]$, if there exists a sequence of numbers $P=(\Delta_1,\dots,\Delta_t,\dots)$ s.t. $\mathrm{sim}(X',Y) \geq \sigma$ when $X'(t)=X(t-\Delta_t)$, then we use the following notation: 
\squishlist 
  \item if $\forall \Delta_t\in P, \:\Delta_t \geq 0$ , then $Y$ emulates $X$, denoted by  ${X \preceq Y}$,
  \item if $\forall \Delta_t\in P,  \:\Delta_t \leq  0$ , then $X$ emulates $Y$, denoted by ${Y \preceq X}$,
  \item if  ${X \preceq Y}$ and ${Y \preceq X}$, then ${Y \equiv X}$.
\squishend
We denote ${X \prec Y}$ if ${X \preceq Y}$ and $\exists \Delta_t\in P, \Delta_t > 0$.
\label{def:follr}
\end{definition}


Note, here the $\mbox{sim}$ similarity function does not have to be a distance function that obeys, among others, a triangle inequality. It can be any function that quantitatively compares the two time series. For example, it may be that when one time series increases the other decreases. We provide a more concrete and realistic example in the application setting below.

Adding this similarity measure to Definition~\ref{def:ArbCausalR} allows us to instantiate the notion of the optimal alignment $P^*$ as the one that maximizes the similarity between $X$ and $Y$: $$P^*=\argmax_P \mathrm{sim}(X',Y),$$ where  $X'(t)=X(t-\Delta_t)$ for any given $P$ and  $\Delta_t \in P$.  With that addition, if $X\preceq Y$, then $X$ VL-Granger-causes $Y$. This allows us to operationalize VL-Granger causality by checking for variable-lag emulation, as we describe in the next section.

\subsection{Example application: Initiators and followers}

In this section, we demonstrate an application of the VL-Granger causal relation to finding initiators of collective behavior. The Variable-lag emulation concept corresponds to a relation of following  in the leadership literature~\cite{FLICAtkdd}. That is,  $X\prec Y$ if $Y$ is a {\em follower} of $X$.  We are interested in the phenomenon of group convergence to a  consensus behavior and answering the question of which subset of individuals, if any, initiated that collective consensus behavior.
With that in mind,  we now define the concept of an initiator and provide a set of subsidiary definitions that allow us to formally show  (in Proposition~\ref{prop:LFprop}) that initiators of  collective behavior are indeed the time series that VL-Granger-cause the collective pattern in the set of the time series. In order to do this, we generalize our two-time series definitions to the case of multiple time series by defining the notion of an aggregate time series, which is consistent with previous Granger causality generalizations to multiple time series~\cite{siggiridou2016granger,eichler2013causal,chen2004analyzing}.  


\begin{definition} [Initiators] Let $\mathcal{U}=\{U_{1},\dots,U_{n}\}$ be a set of time series. We say that $\mathcal{X}\subseteq\mathcal{ U}$ is a set of initiators if $\forall U\in\mathcal{U\setminus\mathcal{X}}$,  $\: \exists X \in \mathcal{X}$,  $\: s.t. X \prec U$, and, conversely,  $\forall X\in  \mathcal{X} \: \exists U\in \mathcal{U\setminus\mathcal{X}},  \:  s.t.  X \prec U$. That is, every time series follows some initiator and every initiator has at least one follower.
\end{definition}

Given a set of time series $\mathcal{U}=\{U_{1},\dots,U_{n}\}$ , and a set of time series $\mathcal{X}\subseteq\mathcal{U}$, we can define an aggregate time series as a time series of means at each step:

\begin{equation}
agg(\mathcal{X})=\left({\frac{1}{|\mathcal{X}|}\sum_{U\in\mathcal{X}}U(0),\dots,\frac{1}{|\mathcal{X}|}\sum_{U\in\mathcal{X}}U(t),\dots}\right)
\end{equation}

In order to identify the state of reaching a collective consensus of a time series, while allowing for some noise, we adopt the concept of  $\epsilon$-convergence  from~\cite{doi:10.1137/100791671}.

\begin{definition} [$\epsilon$-convergence] Let $Q$ and $U$ be time series, $dist:\mathbb{R}^2\times [0,1]$ be a distance function,  and $0<\epsilon\leq1/2$.  If for all time $t\in [t_{0},t_{1}], \: dist(Q(t),U(t))\leq\epsilon$, then $Q$ and $U$ $\epsilon$-converge toward each other in the interval $[t_0,t_1]$. If $t_1=\infty$ then we  say that $Q$ and $U$ $\epsilon$-converge at time $t_0$. 
\end{definition}


\begin{definition} [$\epsilon$-convergence coordination set] Given a set of time series $\mathcal{U}=\{U_{1},\dots,U_{n}\}$, if all time series in $\mathcal{U}$ $\epsilon$-converge toward $agg(\mathcal{U})$, then we say that the set $\mathcal{U}$ is an $\epsilon$-convergence coordination set.
\end{definition}

We are finally ready to state the main connection between initiation of collective behavior and VL-Granger causality.

\begin{proposition}
\label{prop:LFprop}
Let  $dist:\mathbb{R}^2\times [0,1]$ be a distance function, $\mathcal{U}$ be a set of time series, and  $\mathcal{X}\subseteq  \mathcal{U}$ be a set of initiators, which is an $\epsilon$-convergence coordination set converging towards $agg(\mathcal{X})$ in the interval $[t_0,t_1]$. For any $U, U'\in \mathcal{U}$ of length $T$, let  $$\mathrm{sim}(U,U')=\frac{\sum_t 1- dist(U(t),U'(t))}{T}. $$ 
If  for any  $U, U' \in \mathcal{U}$ their similarity $\mathrm{sim}(U,U')\geq 1-\epsilon$ in the interval $[t_0,t_1]$, then $agg(\mathcal{X})$  VL-Granger-causes $agg(\mathcal{U}\setminus\mathcal{X})$ in that interval.
\end{proposition}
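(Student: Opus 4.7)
The plan is to reduce Proposition~\ref{prop:LFprop} to the emulation-implies-causality observation stated just above the ``Example application'' subsection: whenever $agg(\mathcal{X})\preceq agg(\mathcal{U}\setminus\mathcal{X})$ (a fortiori $\prec$), $agg(\mathcal{X})$ VL-Granger-causes $agg(\mathcal{U}\setminus\mathcal{X})$. So I only need to exhibit a non-negative alignment sequence $P^*=(\Delta_1,\dots,\Delta_t,\dots)$, with at least one strictly positive $\Delta_t$, such that the shifted aggregate $(agg(\mathcal{X}))'(t):=agg(\mathcal{X})(t-\Delta_t)$ achieves $\mathrm{sim}\bigl((agg(\mathcal{X}))',agg(\mathcal{U}\setminus\mathcal{X})\bigr)\geq \sigma$ on $[t_0,t_1]$ for some threshold $\sigma\in(0,1]$ expressed in $\epsilon$.

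First I would record two elementary facts valid on $[t_0,t_1]$. From $\epsilon$-convergence of $\mathcal{X}$ toward $agg(\mathcal{X})$, every $X\in\mathcal{X}$ obeys $dist(X(t),agg(\mathcal{X})(t))\leq\epsilon$, and therefore any sufficiently small positive shift whose image still lies in the window keeps $agg(\mathcal{X})(t-\Delta_t)$ within $\epsilon$ of $agg(\mathcal{X})(t)$. From the pairwise similarity assumption, $\frac{1}{t_1-t_0+1}\sum_{t\in[t_0,t_1]}dist(U(t),U'(t))\leq\epsilon$ for all $U,U'\in\mathcal{U}$; averaging this bound over $U\in\mathcal{X}$ and $U'\in\mathcal{U}\setminus\mathcal{X}$ and swapping sums shows that the pointwise-averaged distance between $agg(\mathcal{X})$ and $agg(\mathcal{U}\setminus\mathcal{X})$ is at most $\epsilon$ as well.

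Next I would build the alignment. Since $\mathcal{X}$ is a set of initiators, each follower $U\in\mathcal{U}\setminus\mathcal{X}$ admits some $X_U\in\mathcal{X}$ with $X_U\prec U$, furnishing a non-negative delay sequence $P_U$ with at least one strictly positive entry. I would pick a single representative $P^*$ (e.g.\ some $P_{U_*}$) respecting the non-crossing condition of Definition~\ref{def:AlignSeq}, form $(agg(\mathcal{X}))'$ from it, and bound its distance to $agg(\mathcal{U}\setminus\mathcal{X})$ by inserting $agg(\mathcal{X})(t)$ and applying the triangle inequality:
\begin{equation*}
dist\bigl((agg(\mathcal{X}))'(t),agg(\mathcal{U}\setminus\mathcal{X})(t)\bigr)\leq dist\bigl(agg(\mathcal{X})(t-\Delta_t),agg(\mathcal{X})(t)\bigr) + dist\bigl(agg(\mathcal{X})(t),agg(\mathcal{U}\setminus\mathcal{X})(t)\bigr).
\end{equation*}
The first summand is controlled by the $\epsilon$-convergence estimate and the second by the pairwise similarity estimate. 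Averaging over $t\in[t_0,t_1]$ yields $\mathrm{sim}\bigl((agg(\mathcal{X}))',agg(\mathcal{U}\setminus\mathcal{X})\bigr)\geq 1-2\epsilon$, and choosing $\sigma=1-2\epsilon\in(0,1]$ (using $\epsilon\leq 1/2$) gives the emulation $agg(\mathcal{X})\preceq agg(\mathcal{U}\setminus\mathcal{X})$; the strictly positive entry in $P^*$ promotes this to $\prec$, and the reduction above concludes the argument.

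The main obstacle is the aggregation step: the delay sequences $P_U$ supplied by the initiator definition generally differ across followers, so no canonical pointwise shift simultaneously realizes every $X_U\prec U$. My plan is to side-step this by exploiting the $\epsilon$-convergence of $\mathcal{X}$: within the window the members of $\mathcal{X}$ are clustered tightly around $agg(\mathcal{X})$, so any feasible $P^*$ works and the cost of using a non-optimal alignment is absorbed into the additive $\epsilon$ budget. A secondary subtlety is that $dist$ must admit a triangle-like inequality for the split above; this is automatic when $dist$ is a metric on $\mathbb{R}$ (consistent with the examples used elsewhere in the paper), but the final proof should either state this hypothesis explicitly or redo the estimate directly from the definition of $agg$ via the convexity of the averaging operator.
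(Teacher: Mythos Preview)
Your overall reduction---show $agg(\mathcal{X})\prec agg(\mathcal{U}\setminus\mathcal{X})$ and then invoke the emulation-implies-causality observation---is exactly the paper's strategy. The divergence is in how you manufacture the $\prec$ relation, and there your argument has a real gap.

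The problem is your first ``elementary fact'': from $dist(X(t),agg(\mathcal{X})(t))\leq\epsilon$ for every $X\in\mathcal{X}$ you infer that a time-shifted aggregate satisfies $dist(agg(\mathcal{X})(t-\Delta_t),agg(\mathcal{X})(t))\leq\epsilon$. This does not follow. The $\epsilon$-convergence hypothesis says only that the initiators cluster around their aggregate \emph{at each fixed time}; it says nothing about how $agg(\mathcal{X})$ varies \emph{across} time. Take the degenerate case $\mathcal{X}=\{X\}$ with $X(t)=\sin t$: then $agg(\mathcal{X})=X$, the $\epsilon$-convergence condition holds with $\epsilon=0$, yet $dist\bigl(agg(\mathcal{X})(t-\Delta),agg(\mathcal{X})(t)\bigr)$ can be order~$1$. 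Consequently the first summand in your triangle-inequality split is uncontrolled, your $1-2\epsilon$ similarity bound for the shifted aggregate is unjustified, and the ``absorb the non-optimal alignment into the $\epsilon$ budget'' plan in your final paragraph does not go through.

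The paper avoids this by never shifting $agg(\mathcal{X})$ pointwise. Instead it chains the $\epsilon$-convergence of $\mathcal{X}$ with the individual follower relations $X\prec U$: for each follower $U$ there is some $X\in\mathcal{X}$ and a delayed starting time $t_2>t_0$ after which $dist(X(t),U(t))\leq\epsilon$, and the triangle inequality through $X(t)$ (not through $agg(\mathcal{X})(t-\Delta_t)$) yields $dist(agg(\mathcal{X})(t),U(t))\leq 2\epsilon$ on $[t_2,t_1]$. Averaging over followers gives the same $2\epsilon$ bound for $agg(\mathcal{U}\setminus\mathcal{X})$ on $[t_2,t_1]$. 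The strictly positive delay is then read off from the fact that $agg(\mathcal{X})$ has already $2\epsilon$-settled near a common value on the larger window $[t_0,t_1]$ while $agg(\mathcal{U}\setminus\mathcal{X})$ does so only from $t_2>t_0$ onward; the offset $t_2-t_0$ is what produces $\prec$. Your second ``elementary fact'' (bounding the inter-aggregate distance directly from the pairwise similarity hypothesis) is a legitimate alternative to the paper's route through an individual $X$, but it does not by itself supply any temporal precedence, and the step you use to recover that precedence is the one that fails.
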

 \begin{proof}

  Suppose  $\forall X\in \mathcal{X}$, $X$ and $agg(\mathcal{X})$ $\epsilon$-converge toward each other in the interval $[t_0,t_1]$, then, by definition, for all the times $t \in [t_{0},t_1], \: dist(agg(\mathcal{X})(t),X(t))\leq\epsilon$. By the definition of initiators,  $\forall U\in\mathcal{U}\setminus\mathcal{X}, \: \exists X \in \mathcal{X}$, such that $X\prec U$, from some time $t_2 > t_0$. Thus, we have $\forall t$, s. t.  $t_2 \leq t\leq t_1, \: \: dist(X(t),U(t))\leq\epsilon$, which means $dist(agg(\mathcal{X}),U(t))\leq 2\epsilon$. Hence, we have    $\forall t, t_2\leq t \leq t_1, \:\: dist(agg(\mathcal{X})(t),agg(\mathcal{U}\setminus\mathcal{X})(t) )\leq 2\epsilon$. Since $agg(\mathcal{X})$ $2\epsilon$-converges towards some constant line $v$ in the interval $[t_0,t_1]$ and $agg(\mathcal{U}\setminus\mathcal{X})(t) )$  $2\epsilon$-converges towards the same line $v$ in the interval $[t_2, t_1]$, hence $agg(\mathcal{X}) \prec agg(\mathcal{U}\setminus\mathcal{X})$, which means, by definition, that  $agg(\mathcal{X})$  VL-Granger-causes $agg(\mathcal{U}\setminus\mathcal{X})$.
 \end{proof}
 
We have now shown that a subset of time series are initiators of a pattern of collective behavior  of an entire set if that subset VL-Granger-causes the behavior of the set. Thus, VL-Granger causality can solve the  \clip~\cite{FLICAtkdd}, which is a problem of determining  whether a pattern of collective behavior was spurious or instigated by some subset of initiators and, if so, finding those initiators who initiate collective patterns that everyone follows.

\section{VL-Granger Causality Inference}

Given a target time series $Y$, a candidate causing time series $X$, a similarity threshold $\sigma$, a significance level $\alpha$, and the max lag $\delta_{max}$, our framework evaluates whether $X$ VL-Granger causes $Y$ (with a variable lag), $X$ Granger causes $Y$ (with a fixed lag) or no conclusion of causation between $X$ and $Y$. 

 In Algorithm~\ref{algo:MainFunc} line 1-2,  we have a fix-lag parameter $FixLag$ that controls whether we choose to compute the normal Granger causality ($FixLag=true$) or VL-Granger causality ($FixLag=false$). We present the high level logic of the algorithm. However, the actual implementation is more efficient by removing the redundancies of the presented logic.

 First, we compute Granger causality (line 1 in  Algorithm~\ref{algo:MainFunc}). The flag $GrangerResult=true$ if $X$ Granger-causes $Y$, otherwise $GrangerResult=false$. Second, we  compute VL-Granger causality (line 2 in  Algorithm~\ref{algo:MainFunc}). The flag $VLGrangerResult=true$ if $X$ VL-Granger-causes $Y$, otherwise $VLGrangerResult=false$. 
 
  Based on the work in~\cite{atukeren2010relationship}, we use the Bayesian Information Criteria  (BIC) to compare the residual of regressing $Y$ on $Y$ past information, $r_Y$, with the residual of regressing $Y$ on $Y$ and $X$ past information $r_{YX}$. We use $v_1 \ll v_2$ to represent that $v_1$ is less than $v_2$ with statistical significance by using some statistical test(s). If $BIC(r_Y)\ll BIC(r_{YX})$, then we conclude that the prediction of  $Y$  using $Y,X$ past information is better than the prediction of  $Y$  using $Y$ past information alone. After we got the results of both  $GrangerResult$ and $VLGrangerResult$, then we proceed to report the conclusion of causal relation between $X$ and $Y$ w.r.t.  the following four conditions.
 
 \squishlist
\item {\bf If both $GrangerResult$ and $VLGrangerResult$ are true}, then we compare the residual of variable-lag regression $r_{DTW}$ with both$r_Y$ and $r_{YX}$. If $BIC(r_{DTW}) < min(BIC(r_{YX}),BIC(r_{Y}) )$, then we conclude that $X$ causes $Y$ with variable lags, otherwise, $X$ causes $Y$ with a fix lag (line 3 in  Algorithm~\ref{algo:MainFunc}).
\item {\bf If $GrangerResult$ is true but $VLGrangerResult$ is false}, then we conclude that $X$ causes $Y$ with a fix lag (line 4 in Algorithm~\ref{algo:MainFunc}).
\item {\bf If $GrangerResult$ is false but $VLGrangerResult$ is true}, then we conclude that $X$ causes $Y$ with variable lags (line 5 in Algorithm~\ref{algo:MainFunc}).

\item {\bf If both $GrangerResult$ and $VLGrangerResult$ are false}, then we cannot conclude whether $X$ causes $Y$ (line 4 in Algorithm~\ref{algo:MainFunc}).
\squishend

\setlength{\intextsep}{0pt}
\IncMargin{1em}
\begin{algorithm2e}
\caption{Time-lag test function}
\label{algo:MainFunc}
\SetKwInOut{Input}{input}\SetKwInOut{Output}{output}
\Input{ $X,Y$, $\sigma$, $\alpha$, $\delta_{max}$ }
\Output{$XGrangerCausesY$}
\begin{small}
\SetAlgoLined
\nl ($GrangerResult$,$r_{Y},r_{YX}$)=VLGrangerFunc($X,Y$, $\sigma$, $\alpha$, $\delta_{max}$, $FixLag = true$)\;
\nl ($VLGrangerResult$,$r_{Y},r_{DTW}$)= VLGrangerFunc($X,Y$, $\sigma$, $\alpha$, $\delta_{max}$, $FixLag = false$)\;
\nl \uIf{$GrangerResult=true$ }
{
                 \uIf{$VLGrangerResult=true$}
                {
                
               \nl   \uIf{$BIC(r_{DTW}) \ll min(BIC(r_{YX}),BIC(r_{Y}) )$}
                {
                $XGrangerCausesY$ = TRUE-VARIABLE\; 
                }
                 \Else{
                $XGrangerCausesY$ = TRUE-FIXED\; 
                }              
                
                }
                \Else{
                \nl  $XGrangerCausesY$ = TRUE-FIXED\; 
                }
} 
 \Else{
  				\uIf{$VLGrangerResult=true$}
                {
                 \nl  $XGrangerCausesY$ = TRUE-VARIABLE\; 
                }
                 \Else{
                \nl  $XGrangerCausesY$ = NONE\; 
                }
}
\nl return $XGrangerCausesY$\;
\end{small}
\end{algorithm2e}\DecMargin{1em}

Note that we assume the maximum lag value $\delta_{\max}$ is given as input, as it is for all definitions of Granger causality. For practical purposes, a value of a large fraction ({\em e.g., } half) of the length of the time series can be used. However, there is, of course, a computational trade-off between the magnitude of $\delta_{\max}$ and the time it takes to compute Granger causality by almost all methods.

In the next section, we describe the details of VL-Granger function that we use in Algorithm~\ref{algo:MainFunc}: line 1-2.

\subsection{VL-Granger causality operationalization}.

As shown in the previous section, we can operationalize VL-Granger causality by checking for variable-lag emulation. 
Given time series $X,Y$, a similarity threshold $\sigma$, a significance level $\alpha$, the maximum possible lag $\delta_{max}$, and whether we want to check for variable or fixed lag $FixLag$, Algorithm~\ref{algo:VLGrangerCalFunc} reports whether $X$ causes $Y$ by setting $GrangerResult$ to true or false, and by reporting on two residuals  $r_{Y}$ and $r_{YX}$.

First, we compute the residual $r_Y$ of regressing of $Y$ on $Y$'s information in the past (line 1). Then, we regress $Y(t)$ on $Y$ and $X$ past information to compute the residual $r_{YX}$ (line 2). If $BIC(r_{YX}) \ll BIC(r_{Y})$, then $X$ Granger-causes $Y$ and we set $GrangerResult=true$ (line 3). 
If $FixLag$ is true, then we report the result of typical Granger causality. Otherwise, we consider VL-Granger causality (lines 3-7) by computing the emulation relation between $X^{DTW}$ and $Y$ where $X^{DTW}$ is a version of $X$ that is reconstructed through DTW and is most similar to $Y$, captured by $DTWreconsFunc(X,Y)$ which we explain in Section~\ref{sec:DTWRecont}. 

Afterwards, we do the regression of $Y$ on $X^{DTW}$'s past information to compute  residual $r_{DTW}$ (line 4). Finally, we check whether $BIC(r_{DTW}) \ll BIC(r_{Y})$ and the similar value $simValue\geq \sigma$ (line 8-11). If so, $X$ VL-Granger-cause $Y$.  In the next section, we describe the details of how to construct $X^{DTW}$ and how to estimate the emulation similarity value $simValue$.

\setlength{\intextsep}{0pt}
\IncMargin{1em}
\begin{algorithm2e}
\caption{VLGrangerFunc}
\label{algo:VLGrangerCalFunc}
\SetKwInOut{Input}{input}\SetKwInOut{Output}{output}
\Input{$X,Y$, $\delta_{max}$, $\alpha$, $\sigma$, $FixLag$ }
\Output{ $GrangerResult$,$r_{Y},r_{YX}$}
\begin{small}
\SetAlgoLined
 \nl Regress $Y(t)$ on $Y(t-\delta_{max}),\dots, Y(t-1)$, then compute the residual $r_{Y}(t)$\;

  \uIf{$FixLag$ is true}{
   \nl Regress $Y(t)$ on $Y(t-\delta_{max}),\dots, Y(t-1)$ and $X(t-\delta_{max}),\dots, X(t-1)$, then compute the residual  $r_{YX}(t)$\;
  }
 \Else{
 \nl $X^{DTW}$,$simValue$ = DTWReconstructionFunction($X,Y$) \;
 \nl  Regress $Y(t)$ on $Y(t-\delta_{max}),\dots, Y(t-1)$ and $X^{DTW}(t-\delta_{max}),\dots, X^{DTW}(t-1)$, then compute the residual  $r_{DTW}$\;
 \nl $r_{YX}=r_{DTW}$\;
\nl \uIf{$simValue <\sigma$} 
{
 $GrangerResult=false$\;
\nl return $GrangerResult$,$r_{Y},r_{YX} $\;
}

 } 
 
 \nl \uIf{$BIC(r_{YX}) \ll BIC(r_{Y})$  } 
{
\nl $GrangerResult=true$
}
\nl \Else{
\nl $GrangerResult=false$ \;}
\nl return $GrangerResult$,$r_{Y},r_{YX} $\;
\end{small}
\end{algorithm2e}\DecMargin{1em}

\subsection{Dynamic Time Warping for inferring VL-Granger causality.}
\label{sec:DTWRecont}


In this work, we propose to use Dynamic Time Warping (DTW)~\cite{sakoe1978dynamic}, which is a standard distance measure between two time series.  DTW calculates the distance between two time series by aligning sufficiently similar patterns between two time series, while allowing for local stretching (see Figure~\ref{fig:LinearVSArbLag}). Thus, it is particularly well suited for calculating the variable lag alignment. 

Given time series $X$ and $Y$, Algorithm~\ref{algo:DTWReFunc} reports reconstructed time series $X^{DTW}$ based on $X$ that is most similar to $Y$, as well as the emulation similarity $simValue$ between the two series. 
First, we use $DTW(X,Y)$ to find the optimal alignment sequence $\hat{P}=(\Delta_1,\dots,\Delta_t,\dots)$ between $X$ and $Y$, as defined in Definition~\ref{def:AlignSeq}. Efficient algorithms for computing $DTW(X,Y)$ exist and they can incorporate various kernels between points~\cite{Mueen:2016:EOP:2939672.2945383,sakoe1978dynamic}. Then, we use $\hat{P}$ to construct $X^{DTW}$ where $X^{DTW}(t)=X(t-\Delta_t)$. Afterwards, we use $X^{DTW}$ to predict $Y$ instead of using only $X$ information in the past in order to infer a VL-Granger causal relation in Definition~\ref{def:ArbCausalR}. The benefit of using DTW is that it can match time points of $Y$ and $X$ with non-fixed lags (see Figure~\ref{fig:LinearVSArbLag}). Let $\hat{P}=(\Delta_1,\dots,\Delta_t,\dots)$ be the DTW optimal warping path of $X,Y$ such that for any $\Delta_t \in \hat{P}$, $Y(t)$ is most similar to $X(t-\Delta_t)$. 

In addition to finding $X^{DTW}$, $DTWReconstructionFunction$ estimates the emulation similarity $simValue$ between $X,Y$ in line 3. For that, we adopt the measure from~\cite{FLICAtkdd} below: 


\begin{equation}
	\mathrm{s}(\hat{P})=\frac{\sum_{\Delta_t \in \hat{P}}\mathrm{sign}(\Delta_t)}{|\hat{P}|},
	\label{eq:traCorr}
\end{equation}
where $0<\mathrm{s}(\hat{P})\leq 1$ if $X\preceq Y$,  $-1\leq \mathrm{s}(\hat{P})<0$ if $Y\preceq X$, otherwise zero. Since the $\mathrm{sign}(\Delta_t)$ represents whether $Y$ is similar to $X$ in the past ($\mathrm{sign}(\Delta_t) >0$) or  $X$ is similar to $Y$ in the past ($\mathrm{sign}(\Delta_t) <0$), by comparing the sign of $\mathrm{sign}(\Delta_t)$, we can infer whether $Y$ emulates $X$. The function $\mathrm{s}(\hat{P})$ computes the average sign of   $\mathrm{sign}(\Delta_t)$ for the entire time series. 
If $\mathrm{s}(\hat{P})$ is positive, then, on average, the number of times that $Y$ is similar to $X$ in the past is greater than the number of times that $X$ is similar to some values of $Y$ in the past.  Hence, $\mathrm{s}(\hat{P})$ can be used as a proxy to determine whether $Y$ emulates $X$ or vice versa. 

\setlength{\intextsep}{0pt}
\IncMargin{1em}
\begin{algorithm2e}
\caption{ DTWReconstructionFunction}
\label{algo:DTWReFunc}
\SetKwInOut{Input}{input}\SetKwInOut{Output}{output}
\Input{ $X,Y$}
\Output{ $X^{DTW}$, $simValue$}
\begin{small}
\SetAlgoLined
\nl $\hat{P}=(\Delta_1,\dots,\Delta_t,\dots)$ = DTW( $X,Y$) \;
\nl For all $t$, set  $X^{DTW}(t)=X(t-\Delta_t)$ \;
\nl        $simValue =\mathrm{s}(\hat{P})$ \;
Return $X^{DTW}$, $simValue$\;
\end{small}
\end{algorithm2e}\DecMargin{1em}

\section{Experiments}
We measured our framework performance on the task of inferring causal relations using both simulated and real-world datasets. The notations and symbols we use in this section are in Table~\ref{tb:symbloTable}.

\subsection{Experimental setup}
\label{sec:ExpSet}

\begin{table*}[]
\caption{Notations and symbols }
\label{tb:symbloTable}
\begin{small}
\begin{tabular}{|l|p{5.5in}|}
\hline
{\bf Term and notation} & {\bf Description}                                                                                                                                \\ \hline\hline
$T$               & Length of time series.                                                                                                                      \\ \hline
$\sigma$          & Similarity threshold in Definition~\ref{def:follr}.                                                                               \\ \hline
$\delta_{max}$    & Parameter of the maximum length of time delay                                                                                               \\ \hline
$A \indep B$      & $A$ is independent of $B$ according to the independence test in Section~\ref{sec:ExpSet}.                                       \\ \hline
BIC               & Bayesian Information Criterion, which is used as a proxy to compare the residuals of regressions of two time series.                            \\ \hline
$A \prec B$       & $B$ emulates $A$ w.r.t. the threshold $\sigma$.                                                                                             \\ \hline
$\mathcal{N}$       & Normal distribution.                                                                                             \\ \hline
ARMA or A.       & Auto-Regressive Moving Average model.                                                                                             \\ \hline
VG (All)          & Variable-lag Granger causality with F-test:\newline Given time series $X,Y$, $X$ causes $Y$ if 1) $X$ Variable-lag-Granger causes $Y$ using F-test, 2) $X \prec Y$, and 3) $X \indep Y$.            \\ \hline
VG (No F-test)    & Variable-lag Granger causality, no F-test:\newline Given time series $X,Y$, $X$ causes $Y$ if 1) $X$ Variable-lag-Granger causes $Y$ using BIC comparison, 2) $X \prec Y$, and 3) $X \indep Y$. \\ \hline
G                 & Granger causality ~\cite{atukeren2010relationship}                                                                                                                          \\ \hline
CG                & Copula-Granger method ~\cite{liu2012sparse}                                                                                                                    \\ \hline
SIC               &  Spectral Independence Criterion method ~\cite{shajarisales2015telling}                                                                                                                                           \\ \hline
\end{tabular}
\end{small}
\end{table*}

We tested the performance of our method on synthetic datasets, where we explicitly embedded a variable-lag causal relation, as well as on biological datasets in the context of the application of identifying initiators of collective behavior. 

We compared our method, VL-Granger causality (VG) with several existing methods: Granger causality with F-test (G)~\cite{atukeren2010relationship}, Copula-Granger method (CG)~\cite{liu2012sparse}, and Spectral Independence Criterion method (SIC)~\cite{shajarisales2015telling}. We have two different versions of VG, \emph{VG All} and \emph{VG No F-test}. \emph{VG All} consider a relation to be  a causal relation only if it has a strong predictability property (verified by F-test to check $BIC(r_{YX}) \ll BIC(r_{Y})$), strong emulation relation ($simValue \geq \sigma$), and a strong dependency between the effect and the past information of the cause (verified by the independence test that will be discussed in Section~\ref{sec:indTest}). \emph{VG No F-test} is the same as \emph{VG All} except that a causal relation can have a weak predictability property  (verified by the condition $BIC(r_{YX}) < BIC(r_{Y})$ without using F-test).

In this paper, we explore the choice of  $\delta_{max}$ in $\{0.1T,0.2T,0.3T,0.4T\}$ for all methods to analyze the sensitivity of each method, where $T$ is the length of time series, and set $\sigma=0.5$ as default unless explicitly stated otherwise. 

\textbf{Independence test}.
\label{sec:indTest}
To infer Granger causality, we estimate the  regression between $Y$ and the past part of   $X$, to check whether $X$ predicts $Y$. However, if $X$ causes $Y$, then, in addition to predicting, $Y$ must have dependency with some pattern of $X$ in the past. By checking for dependency between $Y$ and an aligned version of $X$, we can have higher confidence that $X$ might indeed be the  cause of $Y$.  
 
 We deploy Hilbert-Schmidt Independence Criterion (HSIC) test~\cite{gretton2008kernel}, which outperforms contingency table and correlation-based tests~\cite{gretton2008kernel}, for testing the dependency between $Y$ and the past of $X$. The null hypothesis $H_0$ is that the given variables are independent. We use $\alpha$ as a significance level needed to reject $H_0$. We check the dependency between $Y$ starting at time $t+1$ with all the shifted versions of $X$ and  $X^{DTW}$, starting from time $1$ to $t$. If $Y$ is simply a shifted version of $X$ or a non-fixed-lag distortion of $X$, then $Y$ must have a dependency with some version of $X$ and HSIC must reject $H_0$.  We use HSIC with bootstrap resampling to estimate the test threshold. 

\subsection{Datasets}

\textbf{Synthetic data: pairwise level}.
The main purpose of the synthetic data is to generate settings that explicitly illustrate the difference between the original Granger causality method and the proposed variable-lag approach. We generate pairs of time series for which the fixed-lag Granger causality methods would fail to find a relationship but the variable-lag approach would find the intended relationships. 

We generated a set of synthetic time series of 200 time steps. We generated two sets of pairs of time series $X$ and $Y$.
First, we generated $X$ either by drawing the value of each time step from a normal distribution with zero mean and a constant variance ($X(t)\sim\mathcal{N}$) or by Auto-Regressive Moving Average model (ARMA or A.) with $X(t)=0.9\mu + 0.1X(t-1)$.  

The first set we generated was of explicitly related pairs of time series $X$ and $Y$, where  $Y$ emulates $X$ with some  time lag $\Delta \in [1,20]$ ($X\prec Y$). One way to ensure lag variability is to ``turn off" the emulation for some time. In our data, $Y$ remains constant between 110th and 150th time steps. This makes $Y$ a variable-lag follower of $X$. Figure~\ref{fig:FixVsVarLagExample} shows examples of the generated time series.

The second set of time series pairs $X$ and $Y$ were generated independently and as a result have no causal relation. We used these pairs to ensure that our method does not infer spurious relations. 

We set the significance level for both F-test and independence test at $\alpha=0.01$. We  considered there to be a causal relation only if $simValue\geq \sigma$ for our method. 

\textbf{Synthetic data: group level}. This experiment explores the ability of causal inference methods to retrieve {\em multiple}  causes of a time series $Y_{ij}$, which is generated from multiple time series $X_i,X_j$. Fig.~\ref{fig:SynHSiblingRes} shows the ground truth causal graph we used to generate simulated datasets. The edges represent causal directions from the cause time series (e.g. $X_1$) to the effect time series (e.g. $Y_1$). $Y_{ij}$ represents the time series generated by $agg(\{X'_i,X'_j\})$, where $X_i\prec X'_i$ and $X_j\prec X'_j$ with some fixed lag $\Delta\in[1,20]$. The task is to infer edges of this causal graph from the time series. We generated time series for each generator model 100 times.

\begin{figure}
\centering
\includegraphics[width=.48\columnwidth]{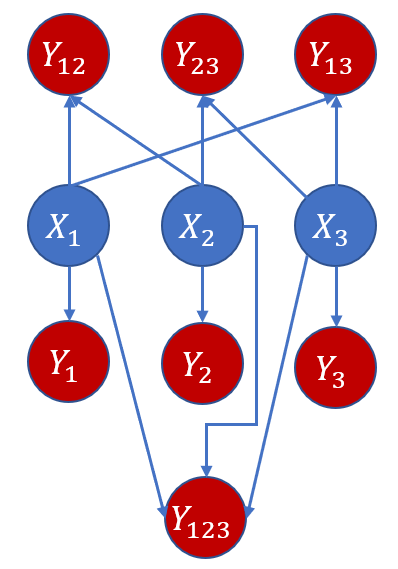}
\caption{ The causal graph where the edges represent causal directions from the cause time series (e.g. $X_1$) to the effect time series (e.g. $Y_1$). $Y_{ij}$ represents a time series generated by $agg(\{X'_i,X'_j\})$, where $X_i\prec X'_i$ with some fixed lag $\Delta$. }
\label{fig:SynHSiblingRes}
\end{figure}

\begin{figure}
\centering
\includegraphics[width=1\columnwidth]{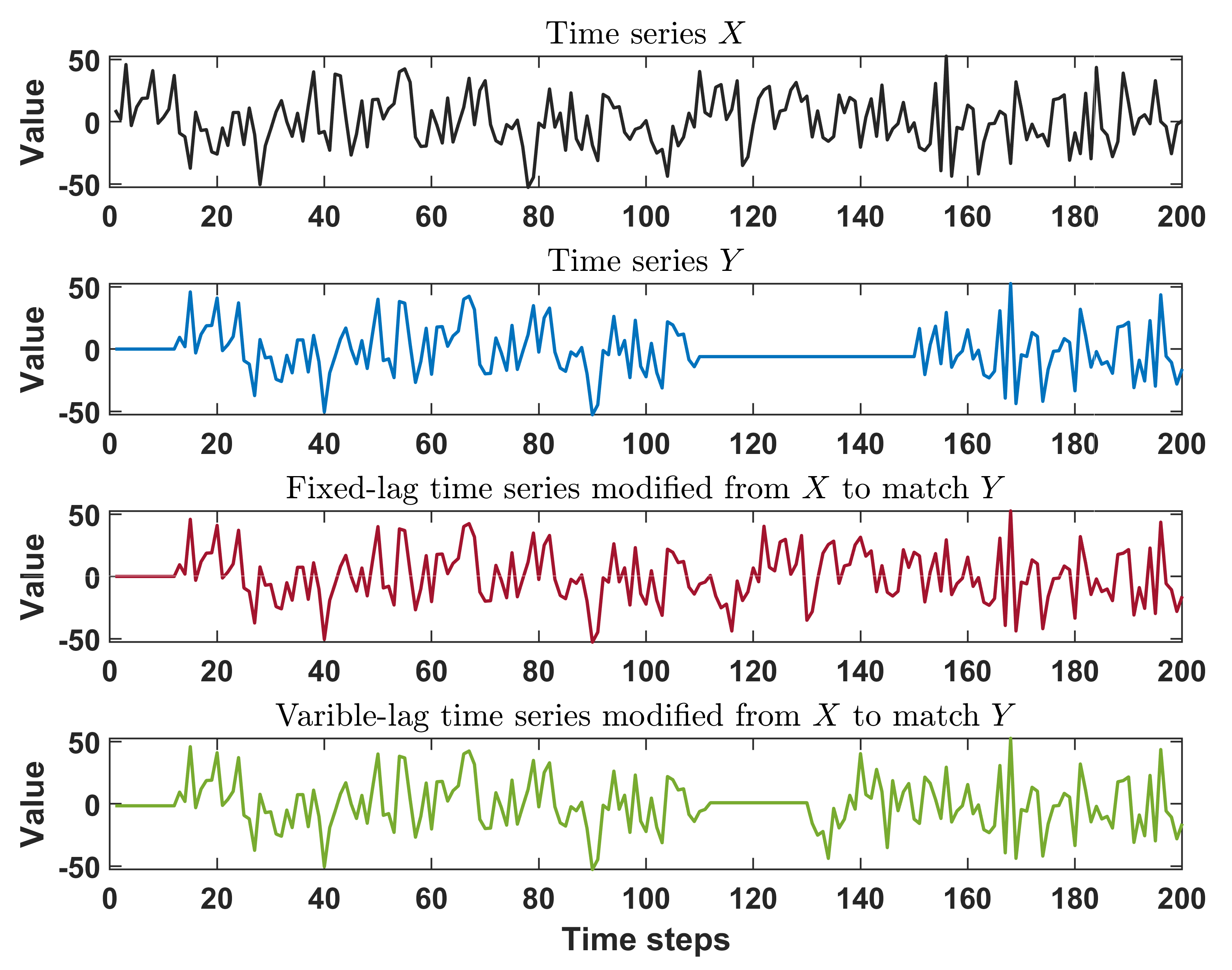}
\caption{ The comparison between the original time series $X$, variable-lag follower $Y$, fixed-lag time series modified from $X$ to match $Y$, and variable-lag time series modified from $X$ to match $Y$. The traditional Granger causality uses only fixed-lag version of $X$ to infer whether $X$ causes $Y$, while our approach uses both versions of $X$ to determine the causality between $X,Y$. Both $X,Y$ are generated from $\mathcal{N}$. $Y$ remains constant from time 110 to 150, which makes it a variable-lag follower of $X$. }
\label{fig:FixVsVarLagExample}
\end{figure}

\textbf{Schools of fish}. We used the dataset of golden shiners (\emph{Notemigonus crysoleucas}) that is publiclly available. The dataset has been collected for the study of information propagation over the visual fields of fish \cite{strandburg2013visual}. There are $24$ coordination events in this dataset. Each coordination event consists of two-dimensional time series of fish movement that are recorded by video. The time series of fish movement are around 600 time steps.  The number of fish in each dataset is around 70 individuals, of which 10 individuals are ``informed" fish who have been trained to go to a feeding site. Trained fish lead the group to feeding sites while the rest of the fish just follow the group.  We represent the dataset as a pair of aggregated time series: $X$ being the aggregated time series of the directions of trained fish and $Y$ being the aggregated time series of the directions of untrained fish. The task is to infer whether $X$ (trained fish) is  a cause of $Y$ (the rest of the group).

\textbf{Troop of baboons}. We used another publicly available dataset of animal behavior, the movement of a troop of olive baboons (\emph{Papio anubis}). The dataset consists of GPS tracking information from 26 members of a troop, recorded at 1 Hz from 6 AM to 6 PM between August 01, 2012 and August 10, 2012. 
The troop lives in the wild at the Mpala Research Centre, Kenya \cite{crofoot2015data,strandburg2015shared}. For the analysis, we selected the 16 members of the troop that have GPS information  available for 10 consecutive days, with no missing data. We selected  a set of trajectories of lat-long coordinates from a highly coordinated event that has the length of 600 time steps (seconds) for each baboon.  This known coordination event is on August 02, 2012 in the morning, with the baboon ID3 initiating the movement, followed by the rest of the troop~\cite{FLICAtkdd}. Again, the goal is to infer ID3 (time series $X$) as the cause of the movement of the rest of the group (aggregate time series $Y$).
In this experiment  we set $\delta_{max} = 120$ seconds, informed by the biological meaning and expertise. 

\subsection{Time complexity and running time}

\begin{table}[]
\caption{Running time of our approach with varying time series length $T$ and maximum time delay $\delta_{max}$.}
\label{tab:runningTime}
\begin{center}
\begin{tabular}{c|c|c|}
\cline{2-3}
                                            & $T=500$ & $T=2000$ \\ \hline
\multicolumn{1}{|c|}{$\delta_{max}=0.05T$} & 22 sec  & 254 sec  \\ \hline
\multicolumn{1}{|c|}{$\delta_{max}=0.10T$} & 59 sec  & 505 sec  \\ \hline
\multicolumn{1}{|c|}{$\delta_{max}=0.2T$}  & 93 sec  & 1680 sec \\ \hline
\end{tabular}
\end{center}
\end{table}

The main cost of computation in our approach is DTW. We used the ``Windowing technique'' for the search area of warping ~\cite{keogh2001derivative}. The main parameter for windowing technique is the maximum time delay $\delta_{max}$. Hence, the time complexity of our approach is $\mathcal{O}(T\delta_{max})$.  Table~\ref{tab:runningTime} shows the running time of our approach on time series with the varying length ($T\in \{500,2000\}$) and maximum time delay ($\delta_{max} \in \{0.05T,0.1T,0,2T\}$).

\section{Results}
We report the results of our proposed approach and the standard Granger causality method on both synthetic and biological data. We also explore how the performance of the methods depends on the basic parameter, $\delta_{\max}$.

\subsection{Synthetic data: pairwise level}

\begin{table}[]
\caption{Average accuracy of inferring causal direction from various cases. Each column represents a method. Each row represents a model.  ``$\mathcal{N}$:$X$" means $X$ was generated from a normal distribution and ``A.:$X$" means $X$ was generated from ARMA model. $X\prec Y$ means $X$ causes $Y$ by an emulation relation and $X\nprec Y$ means no causal relation.  We varied $\delta_{max}$ from $10\%$ to $40\%$ of time series length $T$ and reported the average. }
\label{tb:syndataRes}
\begin{small}
\begin{tabular}{c|c|c|c|c|c|}
\cline{2-6}
\multicolumn{1}{l|}{}                                                                  & \cellcolor[HTML]{C0C0C0}VG  & \cellcolor[HTML]{C0C0C0}VG        & \cellcolor[HTML]{C0C0C0}G & \cellcolor[HTML]{C0C0C0}CG & \cellcolor[HTML]{C0C0C0}SIC \\
\multicolumn{1}{l|}{}                                                                  & \cellcolor[HTML]{C0C0C0}All & \cellcolor[HTML]{C0C0C0}No F-test & \cellcolor[HTML]{C0C0C0}  & \cellcolor[HTML]{C0C0C0}   & \cellcolor[HTML]{C0C0C0}    \\ \hline
\multicolumn{1}{|c|}{$\mathcal{N}$:$X\prec Y$}                                   & 1.00                        & 1.00                              & 1.00                      & 0.79                       & 0.70                        \\ \hline
\rowcolor[HTML]{EFEFEF} 
\multicolumn{1}{|c|}{\cellcolor[HTML]{EFEFEF}$\mathcal{N}$:$X\nprec Y$} & 1.00                        & 1.00                              & 0.92                      & 0.90                       & 0.51                        \\ \hline
\multicolumn{1}{|c|}{A.:$X\prec Y$}                                              & 0.97                        & 0.97                              & 1.00                      & 0.82                       & 0.65                        \\ \hline
\rowcolor[HTML]{EFEFEF} 
\multicolumn{1}{|c|}{\cellcolor[HTML]{EFEFEF}A.:$X\nprec Y$}            & 0.95                        & 0.79                              & 0.78                      & 0.85                       & 0.49                        \\ \hline
\multicolumn{1}{|c|}{$\mathcal{N}$:$X$,  A.:$Y$}                                       & 0.99                        & 0.99                              & 0.95                      & 0.91                       & 0.68                        \\ \hline
\rowcolor[HTML]{EFEFEF} 
\multicolumn{1}{|c|}{\cellcolor[HTML]{EFEFEF}Group $\mathcal{N}$}                      & 1.00                        & 1.00                              & 1.00                      & 0.12                       & 0.53                        \\ \hline
\multicolumn{1}{|c|}{Group A.}                                                         & 1.00                        & 1.00                              & 1.00                      & 0.30                       & 0.50                        \\ \hline
\end{tabular}
\end{small}
\end{table}


Table~\ref{tb:syndataRes} (1st-5th rows) shows the results of the accuracy of inferring causal relations and directions. For each row, we repeated the experiment 100 times  on different simulated datasets and computed the accuracy and reported the mean. The result shows that our methods, VG (all) and (No F-test),  performed better than the rest of other methods. Moreover, we also investigate the sensitivity of varying the value of the $\delta_{max}$ parameter for all methods. We  aggregated the accuracy of inferring causal direction from various cases that have the same $\delta_{max}$ value and report the result. The result in Fig.~\ref{fig:SynAvgAccDeltaMax} shows that our approaches: VG (ALL) and VG (No F-test), can maintain the high accuracy throughout the range of the  values of $\delta_{max}$.  

\begin{figure}
\centering
\includegraphics[width=0.8\columnwidth]{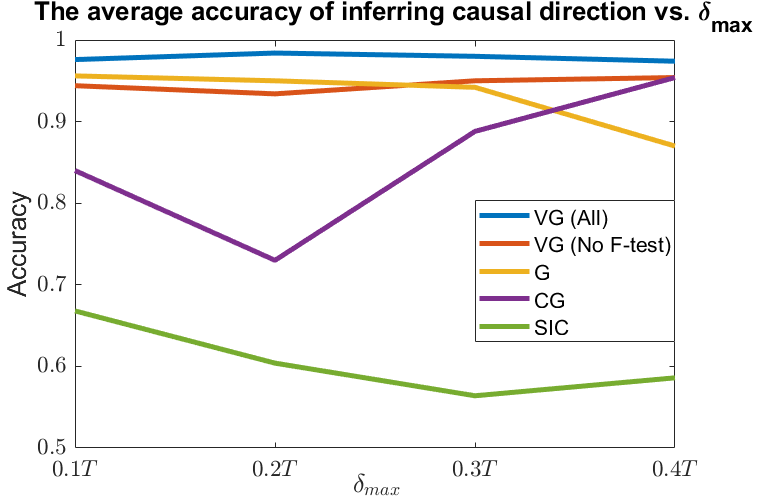}
\caption{ Average accuracy of inferring causal direction as a function of $\delta_{max}$. $x$-axis represents the value of $\delta_{max}$ as a fraction of the time series length $T$ and $y$-axis is the average accuracy. }
\label{fig:SynAvgAccDeltaMax}
\end{figure}

\subsection{Synthetic data: group level}

 Table~\ref{tb:GrHSiblingSynRes} shows the result of causal graph inference. The VG (All),  VG (No F-test), and G (1st-3rd rows) performed the best overall with the highest F1 score in the normal generator model ($\mathcal{N}$).  However, VG (All) and VG (No F-test) have higher F1 score than G in ARMA generator model. This result reflects the fact that our approaches can handle complicated time series in the Granger causal inference task better than the rest of other methods. In addition, we aggregated $X=agg(\{X_1,X_2,X_3\})$ and the rest of time series $Y=agg(\{Y_1,Y_2,\dots,Y_{123}\})$, then we measured the ability of methods to infer that $X$ is  a cause of $Y$. The results, which are in the ``Group $\mathcal{N}$" and ``Group A." rows in Table~\ref{tb:syndataRes}, show that  VG (All),  VG (No F-test), and G performed well in this task, while CG and SIC failed to infer causal relations. 

\begin{table}[t]
\caption{The results of the precision, recall, and F1-score values of edges inference of causal graph in Fig.~\ref{fig:SynHSiblingRes}.  Each row is a method and each column is a result from different generator model of initiators ($X_1,X_2,X_3$ in the causal graph).  We varied $\delta_{max}$ from $10\%$ to $40\%$ of time series length $T$ and reported the average.}
\label{tb:GrHSiblingSynRes}
\begin{small}
\begin{tabular}{c|c|c|c|c|c|c|}
\cline{2-7}
\multicolumn{1}{l|}{}                                  & \multicolumn{3}{c|}{\cellcolor[HTML]{C0C0C0} $\mathcal{N}$} & \multicolumn{3}{c|}{\cellcolor[HTML]{C0C0C0}ARMA} \\ \cline{2-7} 
\multicolumn{1}{l|}{}                                  & Prec.           & Rec.            & F1              & Prec.           & Rec.           & F1             \\ \hline
\rowcolor[HTML]{EFEFEF} 
\multicolumn{1}{|c|}{\cellcolor[HTML]{EFEFEF}VG (All)} & 0.98 &	0.90 &	0.93 &	0.66 &	0.95	& 0.77
           \\ \hline
\multicolumn{1}{|c|}{VG (No F-test)}                   &0.96	&0.90	&0.93	&0.57	&0.95	&0.71           \\ \hline
\rowcolor[HTML]{EFEFEF} 
\multicolumn{1}{|c|}{\cellcolor[HTML]{EFEFEF}G}  &0.91	&0.97	&0.93	&0.38	&0.71	&0.49

  \\ \hline
\multicolumn{1}{|c|}{CG}                               & 0.55	&0.68	&0.59	&0.42	&0.64	&0.47
           \\ \hline
\rowcolor[HTML]{EFEFEF} 
\multicolumn{1}{|c|}{\cellcolor[HTML]{EFEFEF}SIC}      & 0.15	&0.55	&0.23	&0.14	&0.53	&0.23
           \\ \hline
\end{tabular}
\end{small}
\end{table}

\subsection{Animal data: schools of fish}



\begin{table}[t]
\caption{The result of inferring causal direction that $X$ (trained fish) causes $Y$ (untrained fish) behavior, from 24 traces of coordinated movements. Each element represents a number of times (max 24) that each method inferred the correct causal direction (higher is better).}
\label{tb:fish24tr}
\begin{small}
\begin{tabular}{|c|c|c|c|c|c|}
\hline
\rowcolor[HTML]{C0C0C0} 
\multicolumn{1}{|l|}{\cellcolor[HTML]{C0C0C0}}             & VG  & VG        & G & CG & SIC \\
\rowcolor[HTML]{C0C0C0} 
\multicolumn{1}{|l|}{\cellcolor[HTML]{C0C0C0}$\delta_{max}$} & All & No F-test &   &    &     \\ \hline
$0.1T$                                                     & 14  & 19        & 2 & 17 & 19  \\ \hline
\rowcolor[HTML]{EFEFEF} 
$0.2T$                                                     & 18  & 19        & 2 & 11 & 21  \\ \hline
$0.3T$                                                     & 16  & 18        & 5 & 11 & 20  \\ \hline
\rowcolor[HTML]{EFEFEF} 
$0.4T$                                                     & 18  & 18        & 5 & 11 & 19  \\ \hline
\end{tabular}
\end{small}
\end{table}


Table~\ref{tb:fish24tr} shows the result of inferring causal direction that $X$ (trained fish) cause $Y$ (untrained fish) group behavior, as observed from 24 traces of coordinated movements. Granger (G) performed poorly while VG, CG, and SIC performed well on this task. This result reflects the fact that the causal relation between $X$ and $Y$ have highly variable lags of influence between causes and effects. Granger causality typically dismisses the existence of this type of causation and, therefore, misses the behavior. 

\subsection{Animal data: a troop of baboons}

\begin{figure}
\centering
\includegraphics[width=1\columnwidth]{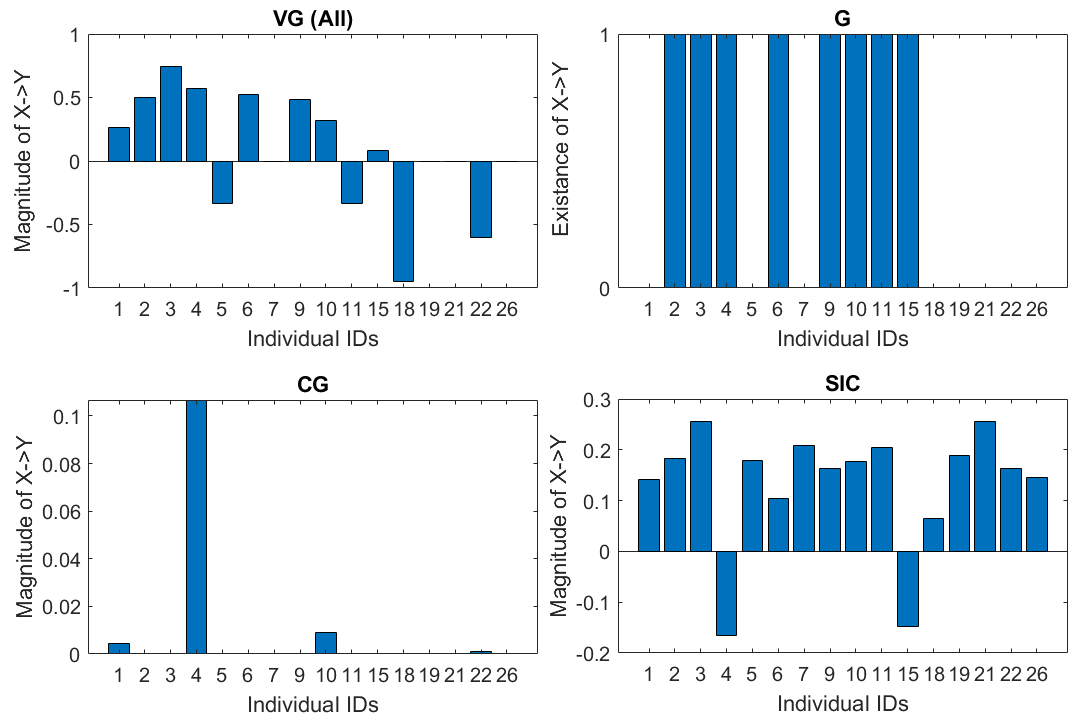}
\caption{ Initiator inference from the time series of coordinated movement in baboons with $\delta_{max}=120$. $ID3$ is the only true initiator in the dataset. Each plot represents the inferred level of each individual (x-axis) being the cause of the group movement (y-axis: higher value is better). As shown, VG (all) is able to uncover the true initiator as the cause of the group's movement, unlike G, CG and SIC (not the $y$-axis scale). 
}
\label{fig:BaboonAug2Res}
\end{figure}


Recall, that in this dataset ID3 is the true initiator of the group's behavior during the observed interval of coordinated behavior.
Fig.~\ref{fig:BaboonAug2Res} shows the result of initiator inference in this coordinated interval. Only VG (all) in the top-left  of the Fig.~\ref{fig:BaboonAug2Res} gave the correct initiator (ID3) the highest weight. G did not differentiate among a large group of individuals as the potential cause. CG gave a week evidence of 0.1 to ID4 (wrong) as the initiator. Finally, SIC gave equal weak support of less than 0.3 to both ID3 and ID21 (false positive). This result shows the robustness of our approach over other methods in the real-world noisy datasets. 
According to ~\cite{shajarisales2015telling}, this implies that ID3 has a spectral independence from the collective time series $Y=agg(\mathcal{U}\setminus \{ID3\})$, while $Y$ has spectral dependence with ID3. This result shows that not only can we gain insight into whether $X$ VG-causes $Y$ from our method, but we can use SIC (and perhaps other methods) to confirm the causal properties between $X$ and $Y$.

\section{Limitation of VL-Granger causality and suggestions}
One of the main input of Granger causality is the max lag variable $\delta_{max}$. There is a report in~\cite{bruns2019lag} that Granger causality might face the overfitting issue when $\delta_{max}$ is inappropriately chosen. The VL-Granger causality also has the same issue. The cross-validation technique might be used to combat this issue when there is no ground-truth of $\delta_{max}$ is available. Moreover, since VL-Granger causality is the generalization of traditional Granger method, it can easily face the issue of overfitting - false positive discovery of causality relation. This situation is similar to the case of fitting linear vs. non-linear regression. The more generalized model seems to suffer from the issue of overfitting easier than a simpler model. We suggest that if the variance of VL-Granger is not different from traditional method, the conclusion of traditional Granger approach should be used. 

Regarding the scalability aspect, even though VL-Granger causality is a generalization of the traditional method, it requires more resources in both time and space due to the need of inferring variable lags. The efficient approach of inferring variable lags should be developed in future. 
\section{Conclusions}
In this work, we proposed a method to infer Granger causal relations in time series where the causes influence effects with arbitrary time delays, which can change dynamically. We formalized a new Granger causal relation, proving that it is a true generalization of the traditional Granger causality. We demonstrated on both carefully designed synthetic datasets and noisy biological data that the new causal relation can address the arbitrary-time-lag influence between cause and effect, while the traditional Granger causality cannot. Moreover, in addition to  improving and extending Granger causality,  our approach can be applied to infer leader-follower relations, as well as the dependency property between cause and effect. We have shown that, in many situations, the causal relations between time series do not have a lock-step connection of a fixed lag that the traditional Granger causality assumes. Hence, traditional Granger causality misses true existing causal relations in such cases, while our method correctly infers them. Our approach can be applied in any domain of study where the causal relations between time series is of interest. In future work, we plan to explore non-linear relations between time series using non-linear kernels. We also provide the source code at~\cite{ShareSourcecode}.



%
\balance
\bibliographystyle{IEEEtran}


\end{document}